
\documentclass[10pt,letterpaper]{article}
\pdfoutput=1
\usepackage{pgfplots}
\pgfplotsset{compat=1.14}
\usepackage{PRIMEarxiv}

\usepackage{graphicx}
\usepackage{caption}
\usepackage{amsmath}
\usepackage{amssymb}
\usepackage{booktabs}

\usepackage{amsthm}
\newtheorem{theorem}{Theorem}
\usepackage{algorithm}
\usepackage{algorithmic}
\usepackage{amssymb}
\usepackage{float}
\usepackage[T1]{fontenc}
\usepackage{amsmath}
\usepackage{booktabs}
\usepackage{multirow}
%
\usepackage[pagebackref,breaklinks,colorlinks]{hyperref}

\usepackage[capitalize]{cleveref}
\crefname{section}{Sec.}{Secs.}
\Crefname{section}{Section}{Sections}
\Crefname{table}{Table}{Tables}
\crefname{table}{Tab.}{Tabs.}


\begin{document}
 
\title{How to Construct Energy for Images ?\\Denoising Autoencoder Can Be Energy Based Model}

\author{Weili Zeng\\
Shanghai Jiao Tong University\\
{\tt\small zwl666@sjtu.edu.cn}
}
\maketitle

\begin{abstract}
Energy-based models parameterize the unnormalized log-probability of data samples, but there is a lack of guidance on how to construct the ``energy''. In this paper, we propose a Denoising-EBM which decomposes the image energy into ``semantic energy'' and ``texture energy''. 

We define the ``semantic energy'' in the latent space of DAE to model the high-level representations, and define the pixel-level reconstruction error for denoising as ``texture energy''. Inspired by score-based model, our model utilizes multi-scale noisy samples for maximum-likelihood training and it outputs a vector instead of a scalar for exploring a larger set of functions during optimization. After training, the semantics are first synthesized by fast MCMC through ``semantic energy'', and then the pixel-level refinement of semantic image will be performed to generate perfect samples based on ``texture energy''. Ultimately, our model can outperform most EBMs in image generation. And we also demonstrate that Denoising-EBM has top performance among EBMs for out-of-distribution detection.
\end{abstract}

\section{Introduction}
\label{intro}

Deep generative learning plays an important role in machine learning, it has a very wide range of applications in image~\cite{ref1,ref2,ref3}, video~\cite{ref4}, distribution alignment across domains~\cite{ref10,ref11,ref12}, and out-of-distribution detection~\cite{ref15,ref16}. EBMs are attractive for their few require constraints on the network architecture, and they also have better robustness and generalization~\cite{ref23} due to samples with high probability under the model distribution but low probability under the data distribution will be directly penalized during training~\cite{ref24}. And a growing body of work has also shown that EBMs~\cite{ref25,ref26} can generate samples comparable to GAN.

EBM models the data density via Gibbs distribution, where the energy function $E(x)$ maps input variables $x$ to energy scalars. In statistical physics, $x$ is usually the state of a system (e.g., ferromagnetic substance, a cup of water, gas), then $E(x)$ is a function related to kinetic or internal energy of the system at state $x$. Therefore, when describing image with an energy function, we hope its construction can be guided under some physical meaning. Moreover, traditional energy function based on the bottom-up network structure directly maps the input to the energy scalar, it may not preserve all the detailed information needed to accurately parameterize the unnormalized log-probability.
And a critical drawback of training EBM is that each iteration requires the use of Markov Chain Monte Carlo (MCMC) to generate negative samples, which is computationally expensive, especially for high-dimensional data.

To address the above issues, this paper induces a new energy based framework Denoising-EBM and defines image energy from two aspects. Intuitively, for an image, people will first understand it from a semantic level and know what it is. After that, people look at its texture to see what it has. Enlightened by this, in this work, we assume that the ``energy" of an image consists of ``semantic energy'' (semantic information) and ``texture energy'' (texture information). In the deep convolutional neural network, the deeper network representation corresponds to the higher-level semantic information of the input image, so we exploit the rich high-level semantics in the latent space formed by the DAE to learn ``semantic energy''. Furthermore, the texture information of image should be captured at pixel level. Inspired by the work~\cite{ref28,ref29}, we define the ``texture energy'' through the denoising reconstruction error. The learning of the energy function is based on an U-Net~\cite{ref30} structure, with many short-cut connections specifically designed to propagate fine details from the inputs x to the high dimensional output.

We use the maximum likelihood to train our model, and in Section \ref{mle} we will show that the training objective function can finally be divided into four parts: a) Optimizing denoising reconstruction. b) Optimizing semantic reference. c) Optimizing semantic marginal likelihoods in latent space. d) Optimizing semantic-based conditional likelihoods in data space. Performing fast MCMC sampling first in the low-dimensional latent space can effectively save the computational resources of training and generation. At the same time, semantic-based conditional sampling makes the initial distribution of MCMC in data space closer to the model distribution that needs to be sampled, which benefit the convergence of MCMC chain and stablize training. After training, we generate semantics by fast MCMC in the semantic latent space first and transfers them back to the data space through semantic inference. Then we perform pixel-level refinement on the preliminary semantic image based on ``texture energy'' to generate the finished samples. In Section \ref{geo}, we also give the relevant geometric interpretation of our framework. Concretely, our contributions can be summarized below:

1) We propose a new energy based framework Denoising-EBM, where the image energy is decomposed to ``semantic energy'' and ``texture energy''. And we parameterize the ``energy'' through U-net with vector-output network instead of scalar-output.

2) Our method can take full advantage of the DAE structure to speed up sampling and stabilize maximum likelihood training.

3) We provide a geometric perspective of Denoising-EBM to analyze its training and sampling process.

4) Denoising-EBM can outperform most EBMs in image generation and has top performance among EBMs for out-of-distribution detection.

\section{Related work}
\label{related}
Our work is first related to FRAME ~\cite{ref31}, where the energy is defined as the response of input image and the Gabor filter. It also shows that the global texture information needs to be fully considered to construct the energy function of the image. Then with the development of deep learning, plenty of works~\cite{ref23,ref33,ref34} started to use deep convolutional neural networks with bottom-up structure to learn energy functions that map input images to energy scalars. However, due to the long mixing time of MCMC sampling for high-dimensional data, actually, it is difficult to implement and requires some practical tricks, such as replay buffer~\cite{ref23} and coarse-to-fine strategy~\cite{ref34}.

To alleviate the sampling complexity of high-dimensional data, ~\cite{ref55,ref56} introduced an adversarial learning framework with additional generator to learn the distribution under the energy model. However, maximizing generator entropy is a tricky problem. On the other hand, MCMC sampling can be performed in the latent space instead of the data space. Since the latent space is lower dimensional and generally unimodal, the mixing of MCMC would be more efficient. This often results in the need to jointly sample the latent space and data space  ~\cite{ref35,ref36,ref37,ref38,ref46}.

Our work also related to denoising auto-encode~\cite{ref39} and denoise score matching~\cite{ref40}, which was further developed by~\cite{ref41} and~\cite{ref20} for learning from data samples corrupted with multiple levels of noise. The difference between them is that the network output is a scalar function in~\cite{ref41} but~\cite{ref20} learn the score functions (the gradients of the energy functions) directly, instead of using the gradients of learned energy functions as in EBM. Such unconstrained score models are not guaranteed to output a conservative vector field, meaning they do not correspond to the gradient of any function, unlike EBM that are obtained through explicitly differentiating a parameterized energy function~\cite{ref29}. But since the unconstrained score model output is a vector, it does not always have to be the derivative of a scalar function during optimization, so it can explore a larger set of functions during optimization than the constrained model~\cite{ref41}.

\section{Preliminary}
There are two important components inside our model: i) Denoising auto-encoder, ii) Energy based model. In this section, we present the backgrounds of Denoise auto-encoder (DAE) and energy based model (EBM), which will serve as foundations of the proposed framework.
$\mathbf{Notation}$: let $x$ and $z$ be two random variables with joint probability density $p(x,z)$, with marginal distributions $p(x)$ and $p(z)$. We will use the following notation:

1)Entropy: $H(x)=\mathbb{E}_{p(x)}[-\log p(x)]$.

2)Conditional entropy: $H(z | x)=\mathbb{E}_{p(x, z)}[-\log p(z | x)]$.

3)KL divergence: $D_{KL}(p \| q)=\mathbb{E}_{p(x)}\left[\log \frac{p(x)}{q(x)}\right]$.

4)Mutual Information: $I(x,z)=H(x)-H(x|z)$

5)$ \delta$ function: {$\delta_{u}(v)= \left\{ \begin{array}{ll}
0 & u \neq v\\
1 & u = v
\end{array} \right.$}

\subsection{Denoising Auto-encoder}
Suppose that our dataset consists of i.i.d. samples $\left\{x_i \in \mathbb{R}^d\right\}_{i=1}^N$ from distribution $q(x)$. And the noise $\epsilon$ is Gaussian noise: $N(\epsilon)=\mathcal{N}(0,\sigma^{2}I_{d})$, then the perturbed data $\tilde{x}$ can be expressed as $\tilde{x}=x+\epsilon$ with distribution $\tilde{x} \sim q_{\sigma}(\tilde{x} | x)$. The DAE~\cite{ref39} is to train a denoised reconstruction function $r=g_{\beta} \circ f_{\alpha}$. Specifically, $f_{\alpha}: \mathcal{X} \to \mathcal{Z}$ is the encoder, which is a projection function from data space $\mathcal{X}$ to latent space $\mathcal{Z}$, and $g_{\beta}: \mathcal{Z} \to \mathcal{X}$ is a decoding function on the representation space, mapping back latent $z$ to the data space. Based on the Markov structure: $x \to \tilde{x} \to z$, we can define the observed joint distribution $q(x,\tilde{x},z)=q(x)q_{\sigma}(\tilde{x} | x)q_{\alpha}(z | \tilde{x})$ where $q_{\alpha}(z | \tilde{x})=\delta_{f(\tilde{x})}(z)$. The DAE is trained to minimize the following denoising criterion:
\begin{align}\nonumber
\mathcal{L}_{D A E}&=\mathbb{E}_{q(x, \tilde{x})} \mathbb{E}_{q_\alpha(z | \tilde{x})} \log q_\beta(x | z)\\
&=\mathbb{E}_{\tilde{x} \sim q_{\sigma}(\tilde{x})}\left[\|r(\tilde{x},\sigma)-x\|^{2}\right],\label{equ1}\\ 
q_{\sigma}(\tilde{x})&=\int q(x) q_{\sigma}(\tilde{x} | x) d x. \nonumber
\end{align}
\begin{theorem}
\label{theorem1}
\cite{ref41} If we train a DAE using equation \ref{equ1}, the optimal reconstruction function $r^{*}(\tilde{x})$ can be expressed in the following form as $\sigma \to 0$:
\begin{align}\nonumber
r^{*}(\tilde{x},\sigma)&=\arg \cdot \min _{r} \mathbb{E}_{\tilde{x} \sim q_{\sigma}(\tilde{x})}\left[\|r(\tilde{x})-x\|^{2}\right] \\
&=\tilde{x}+\sigma^{2} \nabla_{x} \log q_{\sigma}(\tilde{x}).\label{equ2}
\end{align}
\end{theorem}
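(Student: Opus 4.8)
The plan is to ignore the encoder/decoder factorization $r=g_\beta\circ f_\alpha$ and instead treat $r$ as an arbitrary (measurable, square‑integrable) function, characterize the minimizer of the denoising objective by a pointwise variational argument, and then convert the resulting conditional expectation into a score using the special structure of the isotropic Gaussian corruption kernel. First I would write
\[
J(r)=\mathbb{E}_{q(x,\tilde x)}\big[\|r(\tilde x)-x\|^2\big]=\int q_\sigma(\tilde x)\Big(\int \|r(\tilde x)-x\|^2\, q_\sigma(x\mid\tilde x)\,dx\Big)d\tilde x,
\]
and observe that the inner integral is, for each fixed $\tilde x$, a strictly convex quadratic in the vector value $r(\tilde x)$, and that these problems are decoupled across distinct $\tilde x$. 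Hence the pointwise (Bayes‑optimal) minimizer is the conditional mean $r^*(\tilde x)=\mathbb{E}[x\mid\tilde x]=\int x\, q_\sigma(x\mid\tilde x)\,dx$. This is the step where the ``as $\sigma\to 0$'' and the tacit assumption of a sufficiently flexible DAE enter: in the nonparametric / high‑capacity regime the trained reconstruction approaches this optimal regressor.

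Next I would apply Bayes' rule $q_\sigma(x\mid\tilde x)=q(x)\,q_\sigma(\tilde x\mid x)/q_\sigma(\tilde x)$ to get $r^*(\tilde x)=\frac{1}{q_\sigma(\tilde x)}\int x\,q(x)\,q_\sigma(\tilde x\mid x)\,dx$, with $q_\sigma(\tilde x)=\int q(x)q_\sigma(\tilde x\mid x)\,dx$ exactly the marginal appearing in \eqref{equ1}. The key algebraic identity is that for the Gaussian kernel $q_\sigma(\tilde x\mid x)=(2\pi\sigma^2)^{-d/2}\exp(-\|\tilde x-x\|^2/2\sigma^2)$ one has $x\,q_\sigma(\tilde x\mid x)=\tilde x\,q_\sigma(\tilde x\mid x)+\sigma^2\,\nabla_{\tilde x} q_\sigma(\tilde x\mid x)$. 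Substituting this and pulling the gradient outside the $x$‑integral, the first term integrates to $\tilde x$ and the second to $\sigma^2\,\nabla_{\tilde x}\!\big(\int q(x)q_\sigma(\tilde x\mid x)\,dx\big)/q_\sigma(\tilde x)=\sigma^2\,\nabla_{\tilde x} q_\sigma(\tilde x)/q_\sigma(\tilde x)=\sigma^2\,\nabla_{\tilde x}\log q_\sigma(\tilde x)$, which is precisely \eqref{equ2}.

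I expect the main obstacle to be rigor rather than ideas. Two points need care: (i) arguing that the trained DAE actually attains the unconstrained optimum $r^*$ — strictly one only reaches the Bayes‑optimal regressor in the limit of unbounded capacity, which is why the statement is asymptotic; and (ii) justifying the interchange of $\nabla_{\tilde x}$ with $\int (\cdot)\,dx$, which requires mild integrability/decay assumptions on $q$ so that the differentiated integrand admits an integrable dominating function (the Gaussian's rapid decay does most of the work here). A secondary subtlety worth flagging is that, for Gaussian noise, the displayed identity in fact holds \emph{exactly} for every $\sigma>0$; the $\sigma\to0$ qualifier is only genuinely needed if one additionally wants to replace $\nabla\log q_\sigma$ by the true score $\nabla\log q$, or if the same scheme is run with a general non‑Gaussian smoothing kernel, in which case one recovers \eqref{equ2} only to leading order via a Taylor expansion in $\sigma$.
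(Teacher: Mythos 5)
Your proof is correct: the pointwise characterization of the minimizer as the conditional mean $\mathbb{E}[x\mid\tilde x]$, followed by the Gaussian identity $x\,q_\sigma(\tilde x\mid x)=\tilde x\,q_\sigma(\tilde x\mid x)+\sigma^2\nabla_{\tilde x}q_\sigma(\tilde x\mid x)$ and interchange of gradient and integral, is exactly the standard (Tweedie/Miyasawa-style) argument underlying the cited result, which the paper itself states without proof, deferring to \cite{ref41}. Your closing remark is also well taken: for Gaussian corruption the identity is exact for every $\sigma>0$, and the $\sigma\to 0$ qualifier only matters for reading $\nabla\log q_\sigma$ as the true score $\nabla\log q$ (as the paper does immediately after the theorem) or for non-Gaussian kernels.
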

Theorem \ref{theorem1} means that the optimal DAE for additive Gaussian noise can be calculated explicitly, and the result is related to actual score. It also tells us that $r^{*}(x)-x$ is actually an estimate of the actual score $\nabla_{x} \log q(x)$ as $\sigma \to 0$.
\begin{theorem}
\label{theorem2}
Consider $x$ be a random variable of train dataset over space $\mathcal{X}$ with $x \sim q(x)$. $z=f(x)$ is the latent representation over space $\mathcal{Z}$. When training a DAE, we are also maximizing a lower bound on mutual information $I(x,z)$.
\end{theorem}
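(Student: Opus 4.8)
The plan is to recognize the DAE objective as a variational (Barber--Agakov) lower bound on $I(x,z)$ taken along the Markov chain $x\to\tilde x\to z$ that underlies the DAE (so $z=f(\tilde x)$, which reduces to $z=f(x)$ as $\sigma\to 0$). I would start from the identity $I(x,z)=H(x)-H(x\mid z)$ and observe that $H(x)$ involves only the data law $q(x)$ and the fixed noise scale $\sigma$, hence is a constant with respect to the encoder/decoder parameters $(\alpha,\beta)$; therefore raising $I(x,z)$ over those parameters is equivalent to lowering $H(x\mid z)=\mathbb{E}_{q(x,z)}[-\log q(x\mid z)]$, where $q(x,z)=\int q(x)\,q_\sigma(\tilde x\mid x)\,q_\alpha(z\mid\tilde x)\,d\tilde x$ is the $\tilde x$-marginal of the joint introduced in the DAE section.

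Next I would insert the decoder $q_\beta(x\mid z)$ as a variational surrogate for the intractable posterior $q(x\mid z)$. Nonnegativity of relative entropy gives
\[
\mathbb{E}_{q(z)}\big[D_{KL}\big(q(x\mid z)\,\big\|\,q_\beta(x\mid z)\big)\big]
=\mathbb{E}_{q(x,z)}\!\left[\log\frac{q(x\mid z)}{q_\beta(x\mid z)}\right]\ge 0 ,
\]
which rearranges to $H(x\mid z)\le -\,\mathbb{E}_{q(x,z)}[\log q_\beta(x\mid z)]$ and hence
\[
I(x,z)\;\ge\; H(x)+\mathbb{E}_{q(x,z)}[\log q_\beta(x\mid z)] .
\]
It then remains to match the right-hand expectation with the training criterion: reinserting $\tilde x$, $\mathbb{E}_{q(x,z)}[\log q_\beta(x\mid z)]=\mathbb{E}_{q(x,\tilde x)}\mathbb{E}_{q_\alpha(z\mid\tilde x)}[\log q_\beta(x\mid z)]=\mathcal{L}_{DAE}$, and under the standard fixed-variance Gaussian decoder $q_\beta(x\mid z)\propto\exp\!\big(-\tfrac{1}{2c}\|g_\beta(z)-x\|^{2}\big)$ this equals $-\tfrac{1}{2c}\,\mathbb{E}_{q_\sigma(\tilde x)}\|r(\tilde x,\sigma)-x\|^{2}$ up to an additive constant independent of $(\alpha,\beta)$. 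Thus minimizing the denoising reconstruction error in equation~\ref{equ1} is the same as maximizing $H(x)+\mathbb{E}_{q(x,z)}[\log q_\beta(x\mid z)]$, which is a lower bound on $I(x,z)$ --- exactly the assertion.

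I do not expect a serious obstacle here: the statement is essentially a repackaging of the standard variational mutual-information bound, and the slack is precisely $\mathbb{E}_{q(z)}[D_{KL}(q(x\mid z)\,\|\,q_\beta(x\mid z))]$, so the bound is tight exactly when the learned decoder coincides with the true posterior. The one step to carry out deliberately is the passage from the squared-error loss to $\log q_\beta(x\mid z)$: this forces one to fix the decoder to a genuine density (the Gaussian with fixed output variance) so that its normalizing constant is parameter-free, and also to note that although $q_\alpha(z\mid\tilde x)=\delta_{f(\tilde x)}(z)$ is deterministic, the joint $q(x,z)$ --- and therefore $q(x\mid z)$ and the KL term above --- is still well defined because the noise channel $q_\sigma(\tilde x\mid x)$ separates $x$ from $z$.
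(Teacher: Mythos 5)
Your proposal is correct and follows essentially the same route as the paper: both identify the DAE criterion with $\mathbb{E}_{q(x,z)}[\log q_\beta(x\mid z)]$ and invoke the variational (Barber--Agakov) bound $I(x,z)\ge H(x)+\mathbb{E}_{q(x,z)}[\log q_\beta(x\mid z)]$, obtained from nonnegativity of the KL between the true posterior $q(x\mid z)$ and the decoder. Your write-up is in fact more careful than the paper's three-line proof, which writes the final variational step as an equality (it is an inequality, tight only when $q_\beta(x\mid z)=q(x\mid z)$) and leaves the Gaussian-decoder correspondence with the squared-error loss implicit.
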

\begin{proof}
\begin{align*}
&\mathbb{E}_{q(x, \tilde{x})} \mathbb{E}_{q_\alpha(z | \tilde{x})} \log q_\beta(x | z)\\
&=\mathbb{E}_{q(z, x)} \log q_\beta(x | z)\\
&\leq \mathbb{E}_{q(z, x)} \log q_\beta(x | z)+H_q(x)\\
&=I_q(x, z)\qedhere
\end{align*}
\end{proof}
\subsection{Energy Based Model}
Consider that we have the observed example $x_{1}, x_{2},\ldots, x_{N} \sim q(x)$, inspired from a physical point of view or the principle of maximum entropy. An energy based model (EBM) aims to approximates $q(x)$ with a Gibbs model, defined as follows:
\begin{equation}
p_{\theta}(x)=\frac{1}{Z(\theta)} \exp \left(-U_{\theta}(x)\right),
\label{equ3}
\end{equation}
where $U_{\theta}$ is the energy function with parameter $\theta$ and $Z(\theta)=\int_{x} \exp \left(-U_{\theta}(x)\right) d x$ is the partition function. There is no constrains on the particular form of $U_{\theta}(x)$. To find the parameter $\theta$, the model is optimized using the MLE. Objective function of Maximum Likelihood Estimation(MLE) learning is:
\begin{equation}
L(\theta)=\mathbb{E}_{q(x)}\left[-\log p_{\theta}(x)\right] \approx \frac{1}{N} \sum_{i=1}^{N}-\log p_{\theta}\left(x_{i}\right).
\end{equation}
The gradient of $L(\theta)$ is:
\begin{equation}
\begin{aligned}
\nabla_{\theta}L(\theta)&=\mathbb{E}_{q(x)}\left[\nabla_{\theta} U_{\theta}(x)\right]-\mathbb{E}_{p_{\theta}(x)}\left[\nabla_{\theta} U_{\theta}(x)\right]
\end{aligned}
\end{equation}
where $p_{\theta}(x)$ is approximated by Markov chain Monte Carlo (MCMC). Langevin Monte Carlo (LMC) is an MCMC method for obtaining random samples from probability distributions that are difficult to sample directly, which iterates
\begin{equation}
x_{t+1}=x_{t}-\eta \nabla_{x} U_{\theta}\left(x_{t}\right)+\sqrt{2 \eta} e_{t}, \quad e_{t} \sim \mathcal{N}(0, I)
\end{equation}
where $\eta$ corresponds to the step-size. As $x \to 0$ and $t \to \infty$, the distribution of $x_t$ converges to $p_{\theta}(x)$. 

The convergence of MCMC is independent of its initialization in theory, but the initialization method can be crucial in practice. Especially, when we sample in high-dimensional and multi-modal distributions, the initial state may deviate significantly from $p_{\theta}(x)$, which would lead to difficult convergence of MCMC sampling and poor generation performance. In section \ref{method}, we propose a semantic-based initialization, which provides a better initial state for MCMC by leveraging the structure of DAE.
\section{Methodology}
\label{method}
In this section we describe how to build the ``semantic energy'' and ``texture energy''. Our model can be decomposed into two parts (figure\ref{Fig1}): a DAE with U-net structure and a semantic decoder. The overall learning procedure are trained via MLE.
\begin{figure}[H] 
\centering
\includegraphics[width=0.4\textwidth]{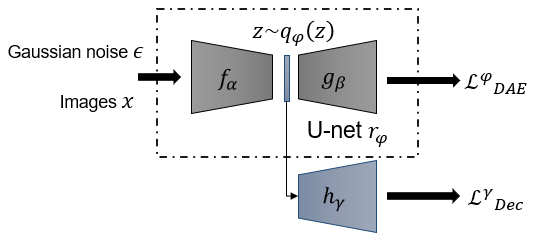} 
\caption{The architecture of Denoising-EBM. $r_{\varphi}=g_{\beta} \circ f_{\alpha}$ donate the DAE with U-net structure; $h_{\gamma}$ denote the semantic decoder mapping the code in latent space $\mathcal{Z}$ back to $\mathcal{X}$.} 
\label{Fig1} 
\end{figure}

\subsection{Energy Building}
Let $r_{\varphi}(\tilde{x},\sigma)=g_{\beta} \circ f_{\alpha}(\tilde{x},\sigma)$ denote the DAE, which maps from data space $\mathcal{X}$ to $\mathcal{X}$ with perturbed image $\tilde{x} \sim q_{\sigma}(\tilde{x} | x)$ as input, and $\varphi=(\alpha,\beta)$. And let $q_{\varphi}(z)$ denote the probability density of $z=f_{\alpha}(\tilde{x},\sigma)$. $h_{\gamma}(z)$ is a semantic decoder, mapping the semantic code $z$ in latent space $\mathcal{Z}$ back to $\mathcal{X}$. Here, the reason why we do not directly use $g_{\beta}$ as the decoder is that the U-net structure makes it impossible to ignore the skip connection from the encoder $f_{\alpha}$ in practice.

By Bayes rule, $q(x)=\frac{q(x | \tilde{x}) q(\tilde{x})}{q(\tilde{x} | x)}=\frac{q(x | \tilde{x}) q(\tilde{x})}{q_{\sigma}(\tilde{x} | x)} \propto \frac{q(x | \tilde{x})}{q_{\sigma}(\tilde{x} | x)}$ so that an we can get an estimated energy function from any given choice of $\tilde{x}$ through energy $U(x)  \approx \log q(x | \tilde{x})- \sigma_{const}$(constants related to $\sigma$)~\cite{ref44}. Thus, we can construct the energy function through the denoising reconstruction error:
\begin{equation}
\label{equ9}
U_{\varphi}^{c}(x, \sigma)=\frac{1}{2\sigma^2}\left\|x-r_{\varphi}(\tilde{x}, \sigma)\right\|^{2}.
\end{equation}
Actually, it will be defined as ``texture energy'' in latter. In this way, we vectorize the output of EBM and combine the U-net structure for capturing more detail information. However directly sampling $p(x)$ is difficult to converge. To alleviate this, we consider to model the latent distribution of noisy real data in the latent space of DAE. Here we assume $x \approx h_{\gamma}(z)$, so $p(z)=p_x(h_{\gamma}(z))|(\det(d h_{\gamma}/dz))|$. Then we ignore the Jacobian and define semantic energy by equation (8) and model the latent distribution by (9). It's experimentally proven to work. We supplement the definition:
$
p_{\varphi, \gamma}(x, z)=\frac{p_\gamma(x, z) e^{-U_{\varphi}^c(x)}}{Z(\varphi, \gamma)},
Z(\varphi, \gamma)=\int p_\gamma(x) e^{-U_{\varphi}^c(x)} d x
,p_\gamma(x, z)=p_{\gamma}(z)p_{\gamma}(x|z)$. Then we decompose the image energy by leveraging the structure of DAE. Theorem \ref{theorem2} reveals that the latent representation $z=f_{\alpha}(\tilde{x}, \sigma)$ of DAE captures as much information about training set $x \sim q(x)$. According to this, we named $\mathcal{Z}$ semantic space, where we can define the ``semantic energy'' $U_{\gamma}^{s}$:
\begin{align}
\label{equ10}
U_{\gamma}^{s}(z, \sigma)=\frac{1}{2\sigma^2}\left\|h_{\gamma}(z)-r_{\varphi}\left(\tilde{h}_{\gamma}(z), \sigma\right)\right\|^{2},\\ \nonumber
\tilde{h}_{\gamma}(z) = h_{\gamma}(z)+\epsilon,\quad \epsilon \sim \mathcal{N}\left(0, \sigma^{2} I_{d}\right).
\end{align}
Then the semantic model $p_{\gamma}(z)$ is formulated as an energy-based model to approximate $q(z|\tilde{x})$.
\begin{equation}
\label{equ11}
p_{\gamma}(z)=\frac{1}{Z(\gamma)} \exp \left(-U_{\gamma}^{s}(z, \sigma)\right).
\end{equation}
The semantic code $z$ will be assigned a lower energy due to the fact that decoder $h_{\gamma}$ map it back to the true data distribution, while the generated semantics will be penalized and given higher energy. Meanwhile, we define the ``texture energy'' $U_{\varphi}^{c}(x, \sigma)$  with equation \ref{equ9} in the data space $\mathcal{X}$.
Since the denoising reconstruction error of observe data is small, it will be assigned a lower energy, while the generated data under the model will be penalized and given higher energy.

Finally, we decompose the image energy into ``semantic energy'' and ``texture energy''. The ``semantic energy'' model the semantics in the latent space, and ``texture energy'' model the detail information of image in data space. In section \ref{mle} and \ref{geo},  we will show how they complement each other through two-stage MCMC sampling.
\begin{figure}[tb] 
\centering
\includegraphics[width=0.45\textwidth]{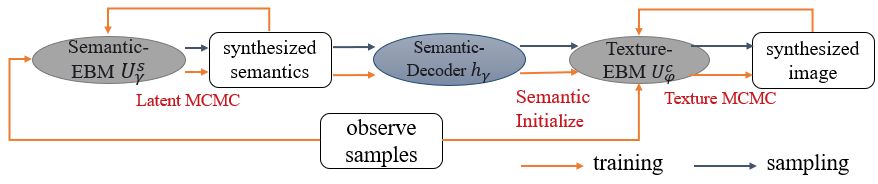} 
\caption{Overview of our framework.} 
\label{framework} 
\end{figure}
\subsection{Model Training}
\label{mle}
Given the training samples $x_{1}, x_{2},\ldots, x_{N} \sim q(x)$, consider the generative Markov structure $z \to x \to \tilde{x}$, we have $p(x,\tilde{x},z)=p(z)p(x | z)p(\tilde{x} | x)$, where $p(\tilde{x} | x)=q_{\sigma}(\tilde{x} | x)$. The parameters $\varphi$ and $\gamma$ of our model are trained by maximizing the lower bound of marginal log-likelihood on the perturbed data $\tilde{x}$:
\begin{align} \nonumber
\log p(\tilde{x})&=\mathbb{E}_{q(x, z | \tilde{x})} \log \frac{p(x, \tilde{x}, z)}{q(x, z | \tilde{x})}\\ \nonumber
&+D_{K L}(q(x, z | \tilde{x})|| p(x, z | \tilde{x})) \\
&\geq \mathbb{E}_{q(x, z | \tilde{x})} \log \frac{p(x, \tilde{x}, z)}{q(x, z | \tilde{x})}.\label{equ12}
\end{align}
Since the KL-divergence is non-negative, it's easy to verify that the bound is tight, then the object we want to optimize can be written as:
\begin{align}\nonumber
\qquad \qquad &\arg \max _{\varphi, \gamma} \mathbb{E}_{q(\tilde{x})} \mathbb{E}_{q(x, z | \tilde{x})}\left[\log \frac{p(x, \tilde{x}, z)}{q(x, z | \tilde{x})}\right] \\\nonumber
&\arg \max _{\varphi, \gamma} \mathbb{E}_{q(x, \tilde{x}, z)}[\log p(x, \tilde{x}, z)] \\\nonumber
&=\arg \max _{\varphi, \gamma} \mathbb{E}_{q(x, \tilde{x}, z)}\left[\log p_{\varphi, \gamma}(x, z) p(\tilde{x} \mid x)\right] \\\nonumber
&=\arg \max _{\varphi, \gamma} \mathbb{E}_{q(x, \tilde{x}, z)}\left[\log p_{\varphi, \gamma}(x, z)\right] \\\nonumber
&=\arg \max _{\varphi, \gamma}\underbrace{\mathbb{E}_{q(x, z)}\left[\log p_\gamma(x, z)\right]}_{\text{Semantic energy model}}\\
&-\underbrace{\mathbb{E}_{q(x)}\left[U_{\varphi}^c(x)-\log Z(\varphi, \gamma)\right]}_{\text{Texture energy model}}\label{equ13}
\end{align}
Here we ignore the term $H[q(x, z | \tilde{x})]$, because $q(x, z | \tilde{x})=q(x | \tilde{x})q(z | \tilde{x})$ does not vary with $\varphi, \gamma$. We use the gradient descent to maximize this object. 

We divide this into two parts: semantic energy model and texture energy model. We fix the parameter $\varphi$ when optimizing semantic energy model, and fix the parameter $\gamma$ when optimizing texture energy model. For semantic decoder, we optimize
\begin{align}\nonumber
&\arg \max _\gamma \mathbb{E}_{q(x, z)}\left[\log p_\gamma(x, z)\right]\\
&=\arg \max _\gamma \mathbb{E}_{q(x, z)}\left[\log p_\gamma(z)+\log p_\gamma(x \mid z)\right].
\label{equ14}
\end{align}
And the second term in equation \ref{equ13} is the expected decoding error of $h_{\gamma}$:
\begin{equation}
\label{equ8}
\mathcal{L}_{D e c}^{\gamma}=\mathbb{E}_{x \sim q(\tilde{x})}\left[\left\|h_{\gamma}\left(f_{\alpha}(\tilde{x}, \sigma)\right)-x\right\|^{2}\right].
\end{equation}

For DAE model, we can get
\begin{align}\nonumber
&\mathbb{E}_{q(x)} \nabla_{\varphi}\left[U_{\varphi}^c(x)+\log Z(\varphi, \gamma)\right]\\
&=\mathbb{E}_{q(x)}\left[\nabla_{\varphi} U_{\varphi}^c(x)\right]-\mathbb{E}_{p_{\varphi, \gamma}(x, z)}\left[\nabla_{\varphi} U_{\varphi}^c(x)\right].\label{equ15}
\end{align}
Expectation in equation \ref{equ14} and \ref{equ15} require MCMC sampling of the semantic marginal distribution $p_{\gamma}(z)$ in $\mathcal{Z}$ and the sample from $p_{(\varphi,\gamma)} (x,z)$. To approximate $p_{\gamma}(z)$, we iterate equation \ref{equ16} from the fixed initial Gaussian distribution. Specifically, since $\mathcal{Z}$ is low-dimensional, MCMC only need iterating a small number of $K$ steps, e.g., $K=20$,
\begin{equation}
\label{equ16}
z^{t+1}=z^{t}-\eta_1 \nabla_{z} U_{\gamma}^{s}\left(z^{t}, \sigma\right)+\sqrt{2 \eta_1} e_{t},\quad e_{t} \sim \mathcal{N}(0, I).
\end{equation}
To approximate $p_{\varphi}(x | z)$, we perform two-stage sampling. The first stage is iterating equation \ref{equ16} to get $z^K \sim \hat{p_{\gamma}}(z)$($\hat{p}$ denote the approximate distribution obtained by MCMC), and the second stage is iterating equation \ref{equ17} from the initial state $x^0=h_{\gamma}(z^K)$.
\begin{equation}
\label{equ17}
x^{t+1}=x^{t}-\eta_2 \nabla_{x} U_{\varphi}^{c}\left(x^{t}, \sigma\right)+\sqrt{2 \eta_2} e_{t},\quad e_{t} \sim \mathcal{N}(0, I).
\end{equation}
It is worth noting that the initial state $x^0$ has actually generated what we roughly need, so we name it semantic image. But due to the simplicity of the semantic model, it lacks some detailed texture information, and the second stage is to supplement the information. The initial distribution of MCMC in the second stage is close to the model distribution that needs to be sampled, so we only need to run a fixed number of $T$ steps, e.g., $T=90$, and the training of the entire framework is also very stable.
The training and sampling procedure is summarized in Algorithm \ref{alg1}. And the overview of our framework is demonstrated in figure \ref{framework}.

\begin{algorithm}[ht]
\caption{Training and sampling algorithm}
\label{alg1}
\textbf{Input}: (1) Training examples $x_{1}, x_{2},\ldots, x_{N} \sim q(x)$, (2) numbers of MCMC steps $K$ and $T$ and step-size $\eta_1$, $\eta_2$, (3) reconstruction training steps $L$ during per epoch, (4) different standard deviations $\{\sigma_i\}_{i=1}^S$. \\
\textbf{Output}: Parameter $\varphi,\gamma,$ synthetic samples $Y$.\\
\begin{algorithmic}[1] 
\WHILE{not converged}
\FOR{$l=1 \to L$}
\STATE randomly sample a mini-batch examples $\{x_i\}_{i=1}^n$\\
and $\{\sigma_i\}_{i=1}^n$.
\STATE compute $\Delta\varphi=\nabla_{\varphi}\mathcal{L}_{D A E}^{\varphi}$, $\Delta\gamma=\nabla_{\gamma}\mathcal{L}_{Dec}^{\gamma}$.
\STATE update $\varphi,\gamma$ based on $\Delta\varphi,\Delta\gamma$ using Adam optimizer.
\ENDFOR
\FOR{$t=0 \to K-1$}
\STATE $z^{t+1}=z^{t}-\eta_1 \nabla_{z} U_{\gamma}^{s}\left(z^{t}, \sigma\right)+\sqrt{2 \eta_1} e_{t}$.
\ENDFOR
\STATE let $x^0=h_{\gamma}(z^K)$.
\FOR{$t=0 \to T-1$}
\STATE $x^{t+1}=x^{t}-\eta_2 \nabla_{x} U_{\varphi}^{c}\left(x^{t}, \sigma\right)+\sqrt{2 \eta_2} e_{t}$.
\ENDFOR
\STATE $\Delta \gamma=\sum_{i=1}^{n}\left[\nabla_{\gamma} U_{\gamma}^{s}\left(z_{i}, \sigma_{i}\right)\right]$
\STATE $\qquad-\sum_{i=1}^{n}\left[\nabla_{\gamma} U_{\gamma}^{s}\left(z_{i}^{K}, \sigma_{i}\right)\right]$.
\STATE $\Delta \varphi=\sum_{i=1}^{n}\left[\nabla_{\varphi} U_{\varphi}^{c}\left(x_{i}, \sigma_{i}\right)\right]$
\STATE $\qquad-\sum_{i=1}^{n}\left[\nabla_{\varphi} U_{\varphi}^{c}\left(x_{i}^{T}, \sigma_{i}\right)\right]$.
\STATE update $\varphi,\gamma$ based on $\Delta\varphi,\Delta\gamma$ using Adam optimizer.
\ENDWHILE
\STATE $Y=x^T$.
\STATE \textbf{return} $\varphi, \gamma, Y$.
\end{algorithmic}
\end{algorithm}

\subsection{Geometric Interpretation}
\label{geo}
\begin{figure}[t]
\centering
\includegraphics[width=0.4\textwidth]{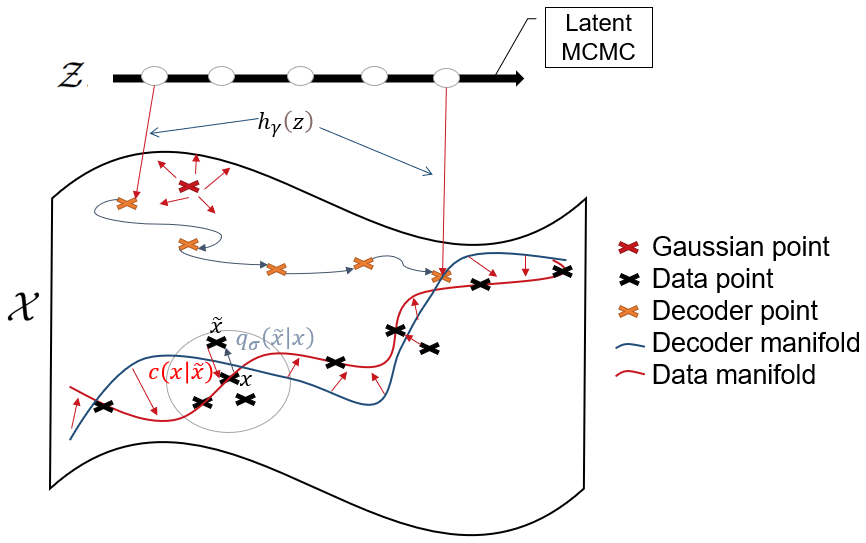} 
\caption{An manifold interpretation of our framework. Suppose that data point concentrates on the low-dimensional manifold embedded in high-dimensional input space $\mathcal{X}$. Through MCMC in latent space $\mathcal{Z}$, the decoder point moves to decoder manifold gradually. Then the MLE learning is the process of evolving from decoder manifold to data manifold.}
\label{fig2}
\end{figure}
We adopt the common assumption that high-dimensional data lie on a low-dimensional manifold. With this assumption, the process of model training can be visualized in figure \ref{fig2}. For convenience, we suppose that data point concentrates on the 1-dimensional curve embedded in 2-dimensional space $\mathcal{X}$. The Denoising-EBM learn a random operator $c(x | \tilde{x})=r_{\varphi}(\tilde{x},\sigma)$ that maps corrupted data $\tilde{x}$ to data manifold. When $\sigma$ is large, $\tilde{x}$ will be outside and farther from the manifold, and  $c(x | \tilde{x})$ should learn to make larger steps to reach the manifold. When $\sigma$ is small, $\tilde{x}$ will fall near the manifold, $c(x | \tilde{x})$ would make small steps. 

Importantly, according to ~\cite{ref41}, the range of noise level will affect the effective region of $c(x | \tilde{x})$, and determine whether the MCMC can converge to the data manifold. So we need to pick noises with different levels and make them spaced narrow enough to avoid blank areas in the data space as much as possible. Then we consider the MCMC of DAE, similiar to score-based model, the generate samples would move toward the learned manifold along the estimated score. When the model is optimal and sigma is small enough, combined with theorem \ref{theorem1} we have:
\begin{equation}
\label{equ19}
\nabla_{x} U_{\varphi}^{c}(x, \sigma) \approx \nabla_{x} \log p(x)-\frac{\partial r(x)}{\partial x}\left(\nabla_{x} \log p(x)\right).
\end{equation}
And when the density moves on the manifold learned by DAE, the reconstruction remains $r(x)\approx x$, which corresponds to the eigenvalues of $\frac{\partial r(x)}{\partial x}$ that are near $1$. Therefore, MCMC tend to converge to the data manifold. However, as shown in figure \ref{fig2}, if the initial state in $\mathcal{X}$ is random gaussian which is usually outside the effective region of $c(x | \tilde{x})$, it may not know how to get back to the high-density region. And the MCMC will have difficulty converging or converge to a local trap. Thus, in order to solve this problem, we introduced two-stage sampling. Let’s denote the decoder manifold $\mathcal{M}$ to be:
\begin{equation}
\label{equ20}
\mathcal{M}=supp(h_{\gamma}\left(z^{K}\right)), z^{K} \sim \hat{p}_{\gamma}(z).
\end{equation}
Sampling through the first stage, the initial state $x^0 \in \mathcal{M}$ already has a small reconstruction error. So it will locate near the learned manifold and $c(x | \tilde{x})$ can determine the direction and step size more accurately.

Finally, the training process can be interpreted as the following two procedures: a) training semantic energy model brings $\mathcal{M}$ closer to the manifold learned by DAE, b) training the texture energy model moves the DAE manifold close to data manifold. We can also treat $\mathcal{M}$ as a constraint to reduce the mode of MCMC in $\mathcal{X}$. Similar work can be found in~\cite{ref46}, but for a different purpose they ignore training in the latent space and lack sufficient semantic information for high-dimensional data.

\subsection{Compared with VAE-based EBM}
In section \ref{related}, we mentioned that some work combined VAE with EBM to make MCMC sampling more efficient, but few works consider DAE in this regard. Here, we present some theoretical analysis about the latent space on DAE and VAE. Similarly, let $r_{\varphi}=g_{\beta} \circ f_{\alpha}$ denote the VAE, where we have:
 \begin{align}
 \label{equ21} \nonumber
 \mathcal{L}(\alpha, \beta)&=\mathbb{E}_{q(x)}[\mathbb{E}_{q_\alpha(z | x)}\left[\log q_\beta(x | z)\right]\\
 &-K L\left(q_\alpha(z | x) \| p(z)\right)].
 \end{align}
The second term in equation \ref{equ21} imposes a regularizer over latent codes, may potentially result in very poor learned representations ~\cite{ref45}.
\begin{align} \nonumber
&\mathbb{E}_{q(x)} K L\left(q_\alpha(z | x)|| p(z)\right)=\int q_\alpha(x, z) \log \frac{q_\alpha(z | x)}{p(z)} d x d z\\\nonumber
&\geq \int q_\alpha(x, z) \log \frac{q_\alpha(z | x)}{p(z)} d x d z-K L\left(q_\alpha(z)|| p(z)\right)\\\nonumber
&=\int q_\alpha(x, z)\left[\log \frac{q_\alpha(z | x)}{p(z)}-\log \frac{q_\alpha(z)}{p(z)}\right] d x d z\\\nonumber
&=\int q_\alpha(x, z) \log \frac{q_\alpha(z | x)}{q_\alpha(z)} d x d z\\
&=I_{q}(x, z)\label{equ22}
\end{align}
From equation \ref{equ22}, we can observe that training a VAE also minimizes the upper bound of the mutual information between the representations and input data. Conversely, We prove that training a DAE maximizes the lower bound of the mutual information in Theorem \ref{theorem2}. That means modelling latent energy with DAE may be sounder for it can carry more information from the input data.
\subsection{Connection with denoising score matching}
Modeling and sampling the data distribution through denoising process results in a natural connection with denoising score matching~\cite{ref20,ref40}. Taking the gradient of energy function in \ref{equ9}, we get:
\begin{equation}
\label{equ23}
\nabla_{x} U_{\varphi}(x, \sigma)=\frac{1}{\sigma^2}(x-r(\tilde{x}))-\frac{1}{\sigma^2}(x-r(\tilde{x}))\nabla_{x}r(\tilde{x}).
\end{equation}
And the denoising score matching is to optimize:
\begin{equation}
\label{equ24}
\mathbb{E}_{p(x,\tilde{x})}\left[\left\|s_{\theta}(\tilde{{x}}, \sigma)+\frac{\tilde{x}-x}{\sigma^2}\right\|^2\right].
\end{equation}
where the $s_{\theta}(x)$ is the score of x. Theoretically, it needs approximating the gradient of the log-probability function, which should be a conservative vector field. However, directly parameterizing the score with unconstrained models are not guaranteed to output a conservative vector field. In contrast, the explicit energy-based model seems to have a theoretical advantage, since it models the score by explicitly differentiating a parameterized energy function ~\cite{ref29}.
In equation \ref{equ23}, the first term is equivalent to the score in equation \ref{equ24}, and the second term can viewed as the constraint term that guarantees the score model $s_{\theta}(x)$ to be a conservative vector field.
\section{Experiment}
\begin{figure*}
\centering
\includegraphics[width=0.8\textwidth]{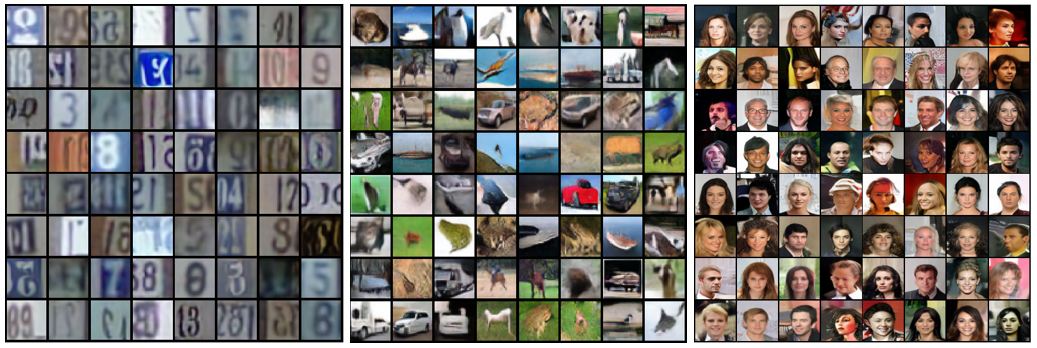} 
\caption{Randomly generated images. Left: SVHN samples. Mid: Cifar-10 samples. Right: CelebA 64$\times$64 samples.}
\label{fig3}
\end{figure*}
Experimental setting:  We used SVHN~\cite{ref68}, CIFAR-10~\cite{ref47}, CelebA 64$\times$64~\cite{ref48} in our experiments. For DAE and semantic decoder, we adopt the structure of ResNet~\cite{ref51,ref52}, where DAE also combines U-net. On hyperparameter selection, we choose $L=3, K=20, T=90$ for SVHN and CIFAR-10, $L=3, K=40, T=100$ for CelebA. Step-size $\eta_1=0.1$, $\eta_2=0.01$. And we fixed $S=128$ for different standard deviations with a geometric sequence from $\sigma_1=1$ to $\sigma_S=0.01$. All training experiments use Adam optimizer with learning rate of $5\times10^{-5}$. All datasets were scaled to $[-1,1]$ and flipped randomly with $p=0.5$ for pre-processing during training.
\subsection{Unconditional Image Generation} 
\begin{figure*}
\centering
\includegraphics[width=0.8\textwidth]{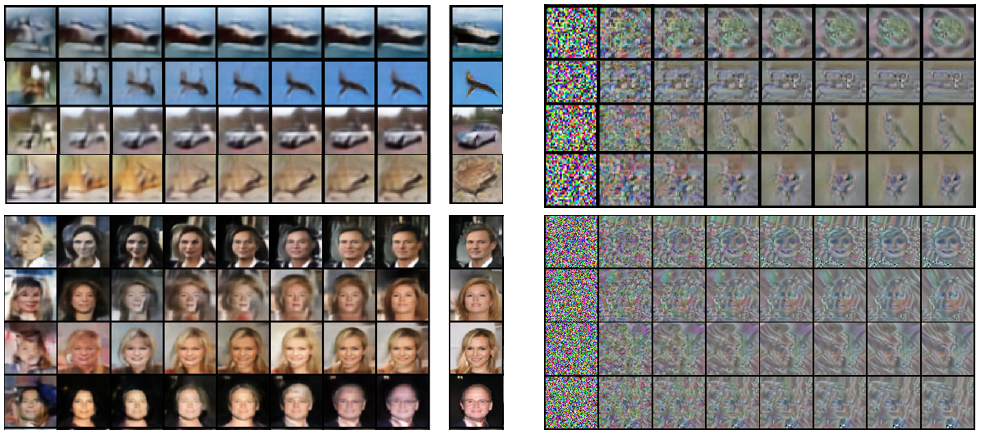} 
\caption{Left: Trajectory of semantic sampling from $h_{\gamma}(z^0)$ to $h_{\gamma}(z^K)$ (for every 5 steps on CIFAR-10, 8 steps on CelebA), $z^0 \sim \mathcal{N}(0, I)$. The last images are two-stage samples for comparison. Right: Trajectory of texture sampling without semantic initialization for every 100 steps, $x^0 \sim \mathcal{N}(0, I)$.}
\label{fig5}
\end{figure*}

Figures \ref{fig3} shows random samples from the two-stage MCMC sampling by our proposed model. We evaluated $50k$ synthetic samples after training with Inception Score (IS)~\cite{ref53} and FID~\cite{ref54} as quantitative metrics.  Table \ref{table1} and \ref{table2} summarize the quantitative evaluations on CIFAR-10 and Celeba 64 $\times$ 64. On CIFAR-10, our model achieves an FID of 21.24, an Inception Score of 7.86. On Celeba 64 $\times$ 64, our model achieves an FID of 14.1. These results are comparable to GAN-based methods and outperforms most energy-based methods. 
\begin{table}[ht]
\centering
\begin{tabular}{ccc}
\toprule[1.5pt]
Model & IS$\uparrow$ & FID$\downarrow$ \\
\midrule[0.5pt]
\textbf{GAN-based} & & \\
\midrule[0.5pt]
WGAN-GP~\cite{ref58} & 7.86 & 36.4 \\
SNGAN~\cite{ref57} & 8.22 & 21.7  \\
EBGAN~\cite{ref59} & 6.40 & 37.11  \\
\midrule[0.5pt]
\textbf{Score-based} & & \\
\midrule[0.5pt]
NCSN~\cite{ref20} & 8.87 & 25.32 \\
DDPM~\cite{ref28} & $\mathbf{9.03}$ & $\mathbf{7.76}$ \\
\midrule[0.5pt]
\textbf{Other Likelihood Models} & & \\
\midrule[0.5pt]
PixelCNN~\cite{ref21} & 4.60 & 65.93  \\
Glow~\cite{ref62} & 3.92 & 48.9  \\
NVAE~\cite{ref63} & 5.51 & 51.67 \\
\midrule[0.5pt]
\textbf{Energy Based Model} & & \\
\midrule[0.5pt]
IGEBM~\cite{ref23} & 6.02 & 40.58 \\
JEM~\cite{ref26} & $\mathbf{8.76}$ & 38.4 \\
EBM+VAE~\cite{ref67} & 6.65 & 36.2  \\
CoopNets~\cite{ref69} & 6.55 & 33.61 \\
MDSM~\cite{ref41} & 8.31 & 31.7 \\
EBM-BB~\cite{ref56} & 7.45 & 28.63 \\
\textbf{Denoising-EBM} & 7.86 & $\mathbf{21.24}$ \\
\bottomrule[1.5pt]
\end{tabular}
\caption{FID and inception scores on CIFAR-10.}
\label{table1}
\end{table}
\begin{table}[ht]
\centering
\begin{tabular}{cc}
\toprule[1.5pt]
    Model & FID$\downarrow$ \\
\midrule[0.5pt]
    \textbf{Denoising-EBM} & $\mathbf{14.1}$ \\
    SNGAN~\cite{ref57} & 50.4 \\
    NAVE~\cite{ref63} & 14.74  \\
    NCSN~\cite{ref20} & 25.3 \\
    EBM-Triangle~\cite{ref65} & 24.7  \\
    BiDVL~\cite{ref70} & 17.24 \\
\bottomrule[1.5pt]
\end{tabular}
\caption{FID on CelebA $64 \times 64$.}
\label{table2}
\end{table}

Based on our proposed framework, we also perform a decouple analysis of ``semantic energy'' and ``texture energy''. For ``semantic energy'', figure \ref{fig5} shows the fast MCMC trajectory of semantic image. We find that good semantic samples can be generated in the semantic space $\mathcal{Z}$ after a few steps. For celeba, 40 MCMC steps can synthesis semantically reasonable images. And for CIFAR-10, 20 steps are enough. However, it is obvious that the semantic image lacks some texture information compared with two-stage samples on the right. And for ``texture energy'', figure \ref{fig5} also shows trajectory of MCMC sampling on ``texture energy'' from gaussian noise. As described in section \ref{method}, MCMC sampling in pixel sapce $\mathcal{X}$ will consume long mixing time without semantic initialization. And it can be found that ``texture energy'' does model the texture information (e.g., eyes, hair) of the image data. This validates our original idea that we always notice the semantic representation of an image first and then observe its texture. And we can simulate this behavior with two-stage MCMC sampling.

The sampling time is mainly focused on MCMC mixing. ~\cite{ref20,ref28} usually require $\ge$ 1000 MCMC steps to synthesize high-fidelity images. In our framework, we only run 35 MCMC steps in latent space and 100 MCMC steps in data space to generate CIFAR-10 samples. And it only takes 8.67 seconds to generate 64 samples, while NCSN ~\cite{ref20} takes 120.11 seconds in the same setting.
\subsubsection{Ablation Studies}
We conduct ablation studies to isolate the influences of different noise settings on our model. Here, we compute fast FID (1000 samples) on CIFAR-10 as quantitative metric. In table \ref{table3}, we show the result of different noise levels $\sigma$ and interval density $S$.

When the $\sigma$ is too large, it is difficult for our model to converge to the optimal state, and the model will learn a ``blurry'' version of the sample distribution, which directly lead to the MCMC sampling both in $\mathcal{Z}$ and $\mathcal{X}$ unable to obtain enough image information. When $\sigma$ is small, any training examples of the our model will always fall near the real data manifold. When the sampling algorithm falls into the region other than the manifold, it may not know how to get back to the high-density region. And if the interval is too loose, the distribution of noisy training samples will have large gaps in the data space. When the sampling algorithm falls into the gaps, it may also not know how to get back to the high-density region. 
\begin{table}[ht]
\centering
\begin{tabular}{ccc}
\toprule[1.5pt]
     & Setting & FID$\downarrow$ \\
\midrule[0.5pt]
\multirow{4}{4em}{noise level}
    & $\sigma_1=0.01,\sigma_{128}=3$ & 66.57 \\
    & $\sigma_1=0.01,\sigma_{128}=2$ & 60.81 \\
    & $\sigma_1=0.01,\sigma_{128}=1$ & $\mathbf{50.02}$  \\
    & $\sigma_1=0.01,\sigma_{128}=0.5$ & 55.23 \\
\midrule[0.5pt]
\multirow{3}{4em}{interval density}
    & $S=64$ & 60.02 \\
    & $S=128$ & $\mathbf{50.02}$ \\
    & $S=256$ & 51.10  \\
\bottomrule[1.5pt]
\end{tabular}
\caption{FID on different noise levels and interval density.}
\label{table3}
\end{table}

\subsection{Out-of-distribution Detection}
We evaluate our energy function by assessing its performance on out-of-distribution detection (OOD) task, which requires models to assign high value on the data manifold and low value on all other regions and can be consider as a proxy of the log probability. Specifically, we use the model trained on CIFAR10 and regard $-U_{\varphi}^{c}\left(x, \sigma\right)$ as the critic of samples. In order to prevent the noise disturbance from affecting the real distribution too much, we choose $\sigma=0.05$. We use SVNH, CelebA, CIFAR100 and interpolations of separate CIFAR10 images as OOD distributions. Following~\cite{ref23,ref24}, we construct OOD metric by area under the ROC curve (AUROC). In table \ref{table6}, our model has significantly higher performance than most of baseline, justifying that our energy function effectively fits the data likelihood and penalizes the likelihood of non-data-like regions. We noticed a severe drop in ROC scores on CIFAR100, probably because of the strong similarity between CIFAR10 and CIFAR100.
\begin{table*}[t]
\centering
\begin{tabular}{cccccc}
\toprule[1.5pt]
  & & SVHN & CelebA & CIFAR100 & Interp. \\
\midrule[0.5pt]
\multirow{6}{5em}{\textbf{Unsupervised Training}}
    & NVAE~\cite{ref63} & 0.42 & 0.68 & 0.56 & 0.64\\
    & IGEBM~\cite{ref23} & 0.63 & 0.7 & 0.5 & 0.7\\
    & EBM-Triangle~\cite{ref65} & 0.68 & 0.56 & - & -\\
    & VAEBM~\cite{ref36} & 0.83 & 0.77 & $\mathbf{0.62}$ & 0.7\\
    & \textbf{Denoising-EBM} & $\mathbf{0.99}$ & $\mathbf{0.86}$ & $\mathbf{0.62}$ & $\mathbf{0.99}$\\
\midrule[0.5pt]
\multirow{2}{5em}{\textbf{Supervised Training}}
    & JEM~\cite{ref26} & 0.67 & 0.75 & 0.67 & 0.65\\
    & VERA~\cite{ref71} & 0.83 & 0.33 & 0.73 & 0.86\\
    & HDGE~\cite{ref66} & 0.96 & 0.8 & 0.91 & 0.82\\
\bottomrule[1.5pt]
\end{tabular}
\caption{: Out-of-distribution detection on SVHN, CelebA, CIFAR100, Interp. (linear interpolation between CIFAR-10 images) datasets. CIFAR-10 is the in-distribution. We compute the AUROC$\uparrow$ of negative texture energy $-U_{\varphi}^{c}\left(x, \sigma\right)$.}
\label{table6}
\end{table*}
\section{Conclusion}
In this paper, we propose a guidance on how to construct image energy. With the unique structure of DAE, we construct the ``energy'' by decomposing it into ``semantic energy'' and ``texture energy''. And more effective sampling can be performed by using two-stage sampling, leading to a more accurate density estimate. Finally, our model demonstrate strong performance on generating task. And in out-of-distribution detection our model has the top performance. In future work, we will try to generalize energy to continuous time and adopt our model on larger-scale images.

{\small

\bibliographystyle{ieee_fullname}
\bibliography{reference}
}

\end{document}